\documentclass[sigconf,noamssymb]{acmart}

\usepackage{amsmath, amssymb}

\usepackage{float}
\floatstyle{ruled}
\newfloat{procedure}{htbp}{lop}
\floatname{procedure}{Procedure}
\AtBeginDocument{%
  \providecommand\BibTeX{{%
    \normalfont B\kern-0.5em{\scshape i\kern-0.25em b}\kern-0.8em\TeX}}}

\setcopyright{acmlicensed}
\copyrightyear{2025}
\acmYear{2025}
\acmDOI{XXXXXXX.XXXXXXX}

\acmConference[CODE@MIT]{}{Nov 14-15,
  2025}{MIT}
\acmBooktitle{2024 Conference on Digital Experimentation @ MIT (CODE @ MIT)}
\acmISBN{978-1-4503-XXXX-X/18/06}

\usepackage{graphicx} % Required for inserting images
\usepackage{amsmath, amssymb, algorithm, algpseudocode}
\usepackage{caption}
\usepackage{algorithm}
\usepackage{algpseudocode}
\usepackage{multicol}
\usepackage{graphicx}
\usepackage{tikz}
\usepackage{subcaption}
\usetikzlibrary{shapes.geometric, arrows}

\tikzstyle{startstop} = [rectangle, rounded corners, minimum width=3cm, minimum height=1cm,text centered, draw=black, fill=red!30]
\tikzstyle{process} = [rectangle, minimum width=3cm, minimum height=1cm, text centered, draw=black, fill=orange!30]
\tikzstyle{decision} = [diamond, minimum width=3cm, minimum height=1cm, text centered, draw=black, fill=green!30]
\tikzstyle{arrow} = [thick,->,>=stealth]

\title{RIE-Greedy: Regularization‑Induced Exploration for Contextual Bandits}

\author{Tong Li}
\affiliation{
  \institution{University of Toronto}
  \country{Canada}
}
\author{Thiago de Queiroz Casanova}
\affiliation{
  \institution{Braze}
  \country{United States}
}

\author{Eric M. Schwartz}
\affiliation{
  \institution{University of Michigan}
  \country{United States}
}

\author{Victor Kostyuk}
\affiliation{
  \institution{Braze}
  \country{United States}
}

\author{Dehan Kong}
\affiliation{
  \institution{University of Toronto}
  \country{Canada}
}

\author{Joseph J. Williams}
\affiliation{
  \institution{University of Toronto}
  \country{Canada}
}

\date{September 2025}

\begin{document}

\begin{abstract}

Real-world contextual bandit problems with complex reward models are often tackled with iteratively trained models, such as boosting trees. However, it is difficult to directly apply simple and effective exploration strategies, such as Thompson Sampling or UCB, on top of those black-box estimators. Existing approaches rely on sophisticated assumptions or intractable procedures that are hard to verify and implement in practice. In this work, we explore the use of an exploration-free (pure-greedy) action selection strategy, that exploits the randomness inherent in model fitting process as a intrinsic source of exploration. More specifically, we note that the stochasticity in cross-validation based regularization process can naturally induce Thompson Sampling–like exploration. We show that this regularization-induced exploration is theoretically equivalent to Thompson Sampling in the two-armed bandit case and empirically leads to reliable exploration in large-scale business environments compared to benchmark methods such as $\varepsilon$-greedy and other state-of-the-art approaches. Overall, our work reveals how regularized estimator training itself can induce effective exploration, offering both theoretical insight and practical guidance for contextual bandit design.

%TODO: add the 'what is good about iteration based learnier' to intro.

% Real-world contextual bandit problems often involve complex or unknown reward structures. In such settings, iteration-based learners such as neural networks and boosting trees are widely used as flexible approximators to capture underlying patterns. However, due to their lack of theoretical guarantees, it is difficult to directly apply common exploration strategies—such as Thompson Sampling or UCB—on top of their fitted estimators. Existing methods that are compatible with these learners often rely on assumptions or procedures that are hard to implement and verify in practice. We XXXX (connect to what we have before)

\end{abstract}

\maketitle
\section{Introduction}
Personalized sequential decision-making~\citep{murphy2003optimal}, where the goal is to select the most effective action for each individual based on their evolving state, behavior, and contextual information, is a common and complex challenge in domains such as digital marketing\citep{codis2019decision}, personalized recommendations~\citep{li2010contextual}, and healthcare~\citep{tewari2017mobile,tewari2021precision}.For instance, in digital marketing, when sending a promotional email one must decide on the best frequency, timing, offer, text, and creative to send to each customer based on their profile, the environment around them, and past behavior. The defining challenge is to make a decision when we only observe the outcomes of the set of decisions that was taken, without knowing ``what would have happened'' had alternative decisions been made. One efficient approach to address this partial-information feedback---referred to as \emph{bandit feedback}---requires balancing exploration (trying less-certain options to learn more about reality) and exploitation (choosing the currently best-known option to maximize reward). And in the presence of other variables describing those options and the context of each decision, this challenge is formalized by the \emph{contextual bandit} framework\citep{langford2007epoch, lattimore2020bandit}.

In practice, contextual bandit problems often involve a large number of features, and the underlying reward function is typically complex and not well captured by well-studied parametric approaches such as linear models or generalized linear models (GLMs). Practitioners therefore rely on flexible learners, such as boosting trees, ensemble of trees, or neural networks, to iteratively approximate the underlying reward function. These models are trained using standard machine learning routines--splitting data into training and validation sets, performing cross-validation, applying early stopping when training iterative models, and tuning hyperparameters. 
In bandit literature, such training routine are often referred to as offline regression oracle~\citep{foster2018practical}, where the 'oracle' knows how to iteratively train the model and reduce the loss function (such as least square loss). 
Because these procedures are widely taught in introductory machine learning courses, they have become the de facto standard in many industry applications and are often adopted as the first step when approaching contextual bandit problems.

%TODO: refer this to offline regreesion oracle
%better explain oracle, adn don't focus too much on it.

However, the downside of such flexibility is that these methods often lack well-defined statistical properties, such as closed-form variance estimates. As a result, it is not straightforward to directly implement classical exploration strategies such as Thompson Sampling (TS)\citep{russo2018tutorial} on top of these complex or even black-box models. At this stage, practitioners generally have two possible directions to proceed after implementing the reward estimator. They can either try to find a contextual bandit algorithm to apply on top of their trained reward function, or they can naively use the trained function to select the best estimated action (i.e., a purely greedy approach). In this work, we show that such a naive greedy strategy can perform very competitively. Before presenting our main results, we first take a brief detour to provide a broader overview of more theoretical developments in this space.

First, there is a broad line of work that seeks to design general-purpose, model-free contextual bandit algorithms. Among these, the most relevant and practical branch is the aforementioned class of offline regression-oracle–based methods~\citep{foster2018practical}. These approaches delegate the reward estimation process described above to an offline regression \emph{oracle} and apply an explicit exploration strategy on top of the oracle’s output. Notably, in these kinds of approaches, when new data is available, the model cannot be updated without re-estimating the entire model on its prior data and the newest data. Unlike in "online" learning, which is far more efficient at incorportaing new data into a model, this traditional "offline" learning or batch-based learning suffers from large computational costs that grow substantially as new data accumulates. But some methods have designed ways to reduce computation cost in terms of the frequency of oracle call needed. A representative example is FALCON \citep{SimchiLeviXu2022}. It computes allocation probabilities in a relatively simple form and significantly reduces the number of oracle calls compared to earlier methods, while achieving theoretically optimal regret guarantees. Nevertheless, it still involves parameters or assumptions that are difficult to specify and validate in practice. For example, to get the action allocation probability, practitioners must obtain the time-varying bounds on the oracle’s least-squares error under an arbitrary data collection process and contextual distribution. If the bound is chosen too big/small, the algorithm empirically can degenerate trivially to pure greedy or pure uniform randomization.
%Practitioners must not only estimate these bounds numerically but also define them as time-varying sequences as data accumulate. 
Moreover, for theoretical tractability, these algorithms often require less practical procedure. For example, Falcon+ requires discarding all previously collected data and refitting the estimator using only the most recent epoch’s data. Such a rigid training schedule may not be efficient in practical stationary case, and it doesn't align well with practical deployment scenarios, where models are typically updated incrementally or through sliding time windows to accommodate non-stationarity. Finally, despite their elegant theoretical foundations, these methods have received limited empirical evaluation in realistic, large-scale settings.

Interestingly, in contrast to these theoretically refined algorithms, accumulating evidence shows that purely greedy action selection can also achieve strong regret performance—both theoretically~\citep{bastani2021mostly} and empirically~\citep{bietti2021bakeoff}—for example, in linear reward settings when the number of contextual features is large. This offers a new understanding of the problem: it suggests that contextual bandits may contain natural sources of exploration, thereby diminishing the necessity and centrality of explicit exploration strategies.

%TODO : add 
This finding is also practically attractive: a purely greedy approach allows practitioners to focus solely on improving the reward model itself, greatly simplifying the deployment pipeline. In practice, this can save substantial research and development effort by avoiding the need to fine-tune exploration hyperparameters or to understand and validate the theoretical assumptions underlying complex bandit algorithms. Moreover, its simplicity makes it flexible and compatible with additional procedures required by practical needs. For example, in contrast to the rigidity of FALCON, practitioners can easily incorporate a sliding time window to adapt it to non-stationary cases.

%because its only the estimator. the overall complexity of the approach makes it easier to fit into
%orgna
%reduces teh barrier of adaoption than many other s 
%because it not adding a new step (a few more assumption and resitrcitino )

%relying on just the estimator is that the estimator is more responsive to changes in the data than estimator and explretion 

However, as mentioned earlier, the existing foundings remains limited in scope. Existing analyses and promising results for exploration-free approaches are confined to linear reward functions and context-rich (i.e., the variety of context is huge), stationary environments. This is because: in these settings, the diversity of contextual feature combinations can induce \emph{passive exploration}. As a result, purely greedy selection can achieve adequate coverage of the action space and thereby obtain sufficient exploration. Nevertheless, real-world applications often involve complex, non-linear, and potentially non-stationary reward functions. Notably, prior work explicitly suggests that such passive exploration may fail in non-stationary cases. Moreover, it is also empirically difficult to verify whether contextual diversity is sufficient. 
%In fact, these challenges fundamentally limit the general applicability of pure-greedy methods.

\subsection{Our Contribution}
In this work, we extend the investigation and use of exploration-free strategy to non-stationary, non-linear, and arbitrarily rich (or even low/no) context settings. We realize that, beside the known phenomenon of \emph{passive exploration} induced in greedy algorithms by rich contextual diversity, the estimation process itself can \emph{intrinsically} introduce an additional layer of exploration. To the best of our knowledge, such links between estimator fitting and bandit exploration have been overlooked in previous work. We focus our study on iterative learners such as boosting trees in real business scenarios. We show that, when trained under regularization, the reward estimator itself can introduce exploration, which further reduces the need for doing additional exploration in practice.

%the e p induces the exploraton because of the early stopping intrinsic to the method.
%This property enables purely greedy selection to remain effective even when contextual variation is limited or when the reward distribution drifts over time.
More specifically, when fitting a complex model iteratively and using cross-validation on randomly split validation sets to determine the early-stopping iteration, the randomness in the validation split introduces stochasticity into the training process. In Section~\ref{sec:MAB_reward_analysis}, we demonstrate that, when simplified to a two-armed multi-armed bandit setting (i.e., no contextual features), this regularization process of early stopping, originally designed for preventing over-fitting, behaves analogously to a hypothesis-testing procedure: each additional iteration is rejected with a probability similar to a $p$-value threshold (again, the randomness arises from the validation split). Conceptually, the learner continues training in proportion to the likelihood that the learned reward pattern truly exists in the validation data. This mechanism parallels the principle of Thompson Sampling, which selects actions in proportion to their likelihood of being optimal, reflecting the intrinsic uncertainty in beliefs about their rewards. We refer to this phenomenon as \emph{regularization-induced exploration}. Exploration is intrinsic to early-stopping in cross-validation for iterative learners.

Empirically, as shown in our results section, this intrinsic exploration mechanism makes pure-greedy strategy with early stopping perform almost identical to Thompson Sampling in the two-armed case. Moreover, we evaluate our approach on both complex stationary and non-stationary cases generated from real large-scale business environments. It shows strong performance compared to popular approaches such as $\epsilon$-greedy and theoretically optimal algorithm such as FALCON.

% Lastly, this exploratory behavior--emerging naturally from the model-fitting process--has been largely overlooked in prior work, where estimator training is treated purely as a loss-minimization step and external exploration is imposed afterward. We are the first to discuss the intrinsic link between loss-reduction under regularization via early stopping and bandit exploration.

In summary, our work explores the use of intrinsic exploration behavior arising from the regularized estimator training process. Theoretically, it offers new insights into how learning dynamics themselves can drive exploration in contextual bandits. Practically, it provides guidance that can either remove the need to design explicit exploration strategies or help practitioners better estimate how much exploration is needed (potentially less than previously expected) and more easily scope the parameter range.

\section{Problem Setup}
Our problem setting is closely related to the regression-oracle-based contextual bandit setting \citep{foster2018practical}, while also having important distinctions. In this section, we first give the basic contextual bandit setup, then describe the offline regression-oracle-based setting, and finally highlight our distinctions.

We consider a standard contextual bandit problem where an action/arm is denoted as $a$ and a context is denoted as $x$. Let $\mathcal{A}$ be the set of all available actions, with a total number of $K$ distinct actions. Further, let $\mathcal{X}$ be the context space and $\mathcal{D}_{\mathcal{X}}$ be the distribution of the context.

At each round $t = 1, \dots, T$, a context $x_t \sim \mathcal{D}_{\mathcal{X}}$ is sampled by nature, and corresponds to a reward vector $\vec{r}_t \in [0,1]^K$. The agent selects an action $a_t \in \mathcal{A}$ according to a policy/algorithm $\pi$, and receives the reward $r_t(a_t)$, which is partial information from the full reward vector $\vec{r}_t$. Note that the reward vector $\vec{r}_t$ depends on both the context $x_t$ and the action $a_t$.

Let $H_t = (x_1, a_1, r_1, \dots, x_t, a_t, r_t)$ be the ordered tuple of history. The bandit algorithm $\pi$ selects actions by mapping history and the current context $(H_{t-1}, x_t)$ to an arm-selection distribution $P_{\mathcal{A},t}$.

%We consider actions that are represented by feature vectors. That is, different actions can share some common feature values, so that learning about one action may help infer the rewards of others. 

The objective of the agent is to maximize the cumulative expected reward:
\[
\max_{\pi}\; \sum_{t=1}^T \mathbb{E}[\,r_t(a_t, x_t)\,].
\]

\subsection{Offline Regression Oracle Based Setting}
A practical approach is to reduce the complex contextual bandit problem to an offline regression problem. The broader idea is: at each step, we train a reward estimator using historical data and consider action selection only based on the trained estimator—ignoring additional information from the dataset itself. \\

This is formed as a offline regression-oracle–based problem \citep{SimchiLeviXu2022}. In this setting, we assume that the learning and estimation of the reward function are performed by training an estimator, such as a boosting tree, using a squared-loss objective.
Specifically, let $\mathcal{F}$ denote a class of predictors $f: \mathcal{X} \times \mathcal{A} \to [0,1]$. The classic regression-oracle setting assumes access to an oracle that, given a dataset $H = \{(x_i, a_i, r_i)\}_{i=1}^n$, returns
\[
\text{ORACLE}(H) = \arg\min_{f \in \mathcal{F}} \sum_{(x,a,r) \in H} (f(x,a) - r)^2.
\]
Note that the oracle does not have access to the true reward function. Instead, it represents a process that maps historical data—contexts, actions, and rewards—into a predictor that minimizes the squared loss. In practice, this corresponds to the estimator training (fitting) process.

The above oracle definition is a simplification of the training process in reality. For example, for complex models such as boosting trees, people often need to perform regularization and hence do not exactly minimize the loss on the training set.

\paragraph{Our Key Distinction.}
However, the classical formulation represents an idealized and simplified scenario. In practice, particularly beyond linear or generalized linear models, practitioners often need to prevent overfitting rather than exactly minimize the squared error loss on the training data $H$. This suggests that the practical implementation of an algorithm can be different from its theoretical formulation.

While it seems like the simplification of the estimation method is a minor one, we argue that considering alternative estimation methods, such as regularization and early stopping, actually has great benefits for contextual bandit algorithms.

In our problem formulation, we discuss the impact of the estimator training process on the resulting action selection pattern, exploration behavior, and reward performance. Specifically, we compare estimators trained with a fixed iteration to estimators trained using standard early stopping via cross-validation. We also compare additional explicit exploration strategies applied on top of the trained estimator. Finally, we evaluate both stationary and non-stationary settings, where in the latter case the reward distribution may drift over time and the agent operates with a sliding history window.

% \section{Problem Formulation}
% We consider a standard contextual bandit problem with finite action set $\mathcal{A}$ of size $K$, context space $\mathcal{X}$, and bounded stochastic rewards $r(a,x)\in[0,1]$. At each round $t=1,\dots,T$, a context $x_t\sim \mathcal{D}_{\mathcal{X}}$ is observed, the agent selects an action $a_t\in\mathcal{A}$ according to a policy $\pi_t$ that maps $(H_{t-1},x_t)$ to a distribution $P_{\mathcal{A},t}$ over $\mathcal{A}$, where $H_{t-1}$ denotes history up to time $t-1$, and receives reward $r_t \sim P(R\mid a_t,x_t)$. Actions are represented by feature vectors. In other words, different actions can have some shared action feature values, so that learning about one action can potentially help with informing others. The objective is to maximize cumulative reward:
% \[
% \max_{\pi}\;\sum_{t=1}^T \mathbb{E}[\,r_t(a_t,x_t)\,].
% \]

% Finally, We evaluate both stationary and non-stationary settings, where in the latter case the reward distribution may drift over time and the agent operates with a sliding history window.

\section{Related Work}

%[Not sure where to put this: But there is a chapter and a related NIPS paper out there about "perturbation" methods in estimation resembling Thompson Sampling. Chapter https://www.ambujtewari.com/research/abernethy16perturbation.pdf and NIPS 2019  https://www.ambujtewari.com/research/kim19optimality.pdf ]

The contextual bandit framework provides a principled way to balance exploration and exploitation in sequential decision-making. Classical algorithms such as UCB-based methods \citep{auer2002finite,abbasi2011improved} and Thompson Sampling \citep{russo2018tutorial, agrawal2013thompson} achieve strong regret guarantees under realizability assumptions, such as linearity or bounded function classes. However, these assumptions are often unrealistic in practical applications where the reward function is nonlinear, high-dimensional, and varies across contexts.

A major research direction in the \emph{agnostic} setting aims to design general-purpose and computationally tractable algorithms that perform well for arbitrary policy classes without relying on strong assumptions. Early approaches such as Regressor Elimination \citep{agarwal2012contextual}, Policy Elimination \citep{dudik2011efficient},  and ILOVETOCONBANDITS \citep{agarwal2014taming} introduced the use of a \emph{cost-sensitive classification oracle}. This idea enabled a non-trivial “online-to-offline reduction,” translating the contextual bandit problem into repeated calls to an offline supervised learner, and it can work with . While these algorithms achieve optimal theoretical regret, they are computationally intractable in practice: the cost-sensitive classification oracle itself is NP-hard to solve \citep{agrawal2016linearcbwk}, and the algorithms typically require repeated dataset augmentation, leading to excessive memory and runtime demands \citep{SimchiLeviXu2022}.  

Subsequent work sought to improve scalability by replacing the classification oracle with a \emph{regression oracle}\citet{foster2018practical, foster2020beyond, zhu2022contextual}. Those approaches leverage a least-squares regression oracle to approximate the reward function, significantly improving computational efficiency and aligning more closely with common machine learning practice. More recently, \citet{SimchiLeviXu2022} introduced the FALCON algorithm, which achieves theoretical optimality for general function classes using an \emph{offline regression oracle}. FALCON represents a key step forward: it replaces cost-sensitive classification with standard regression, thereby broadening applicability to infinite or nonparametric function classes (e.g., boosting trees, neural networks) while remaining oracle-efficient. Nonetheless, its practical deployment is limited by assumptions that are difficult to verify, such as requiring explicit bounds on the oracle’s expected squared error under arbitrary data distributions, and by structural dependencies (e.g., increasing epoch schedules) that are not naturally compatible with non-stationary or sliding-window learning. Meanwhile, \citet{zhao2025sharp} is a concurrent work to ours that focuses on KL-regularized regret and Reinforcement Learning from Human Feedback (RLHF) settings. Their approach can work with a regression oracle and simplifies to the EXP algorithm in contextual bandit reward settings. It has similar reward guarantees and practical limitations as FALCON, and we evaluate it together with FALCON in our simulation study.

% Among existing approaches, FALCON is particularly relevant to our study. It extends the regression-oracle line of work and has been shown to perform favorably against a broad range of contextual bandit algorithms evaluated in the \emph{Contextual Bandit Bakeoff} \citep{foster2018practical}. Its combination of strong theoretical guarantees and practical use of offline least-squares regression makes it a promising baseline for real-world applications. However, as discussed above, its implementation still depends on several parameters that are difficult to specify, and its epoch-based structure does not naturally accommodate non-stationary environments.

Our work takes a different perspective. Rather than treating the oracle solely as a prediction module and designing exploration on top of it, we highlight that the oracle training process itself can be leveraged as an intrinsic source of exploration. In modern contextual bandit systems, reward estimators are often implemented using flexible, iteration-based learners such as gradient-boosted trees or neural networks, trained with standard regularization and early-stopping procedures. These mechanisms inherently introduce stochasticity in model fitting through random validation splits and iteration-dependent variance, which can naturally induce exploration behavior. By leveraging this intrinsic property, we unify estimation and exploration within a single layer, leading to an approach that is simple, interpretable, and computationally efficient, while remaining directly compatible with large-scale machine learning pipelines and adaptable to non-stationary environments. This perspective connects standard machine learning regularization techniques with bandit exploration, offering a novel and practical direction for designing general-purpose contextual bandit algorithms.

\section{Method}
In this work, we investigate the intrinsic exploration behavior of the common machine learning regularization approach—early stopping. In other words, we do not take additional exploration strategies and simply select actions following the pure-greedy strategy on top of it. We illustrate this common procedure in Procedure~\ref{proc:early-stopping}. Note that for line 3 in the procedure (finding the best step-wise base learner), it takes different forms for different iterative learners, and we take the boosting tree format as an example. Also, in practice such base learner is fitted using approximation and involves many tuning parameters (e.g., for boosting trees, it depends on the split on training/validation sets, the minimum number of points in a leaf, the depth of the tree). We delegate the training parameter tuning process to the business routine (i.e., they are not tuning parameters introduced by our approach), and this is commonly assumed in any work with regression oracle.

Our contribution is to demonstrate that the variability in the estimator’s decisions, arising from early stopping and cross-validation, naturally induces exploration while preserving exploitation. This can greatly reduce the need for designing additional exploration strategies. This perspective differs from prior work, which typically assumes an oracle estimator and then adds exploration mechanisms on top. Our approach instead makes use of what is already embedded in the training process, requiring no additional components beyond the base learner.

At a high level, the procedure evaluates each boosting step on a held-out cross-validation set and continues to the next iteration of training only if the new learner improves predictive performance. The likelihood of stopping is proportional to the uncertainty in the data. Stopping at earlier iterations effectively allows the model to learn less and explore more. As we show in Section~\ref{sec:MAB_reward_analysis}, this mechanism yields allocation probabilities that align closely with Thompson Sampling in the two-armed bandit case. Further, in Section~\ref{sec:result} we show that empirically, in real business-inspired simulation environments, such exploration can lead to desirable reward in both stationary and non-stationary settings. Our work highlights that the regularization procedure in estimator fitting can act as an exploration strategy, and hence practitioners can downplay or remove the need for researching additional explicit exploration strategies. In cases where practitioners would like to apply additional exploration strategy, our work entails that they are more likely to set the algorithm parameter for the exploration strategy with less amount of exploration and more toward the exploitation end.

% The algorithm has one tuning parameter, $\alpha$, which controls the degree of exploitation. When context features are sufficiently rich, explicit exploration is less critical, and the model can rely 
% more on exploitation to maximize reward. This aligns with  \citep{bastani2021mostly}, who show that greedy policies can succeed with little or no explicit exploration under such conditions. In practice, we set $ \alpha = 1.5 $.
% The other inputs, $\tau$ and $n_{\text{wait}}$, are determined by the application setting rather than by the algorithm itself: 
% $\tau$ reflects the natural update frequency of the system (e.g., retraining once per day in a business application), and $n_{\text{wait}}$ is a standard training hyperparameter for early stopping.

In our simulation studies, we focus on applying our approach to boosting trees, a widely used base model in large-scale contextual bandit applications.  However, the idea is generalizable to other \emph{iteration-based learners} such as neural networks. The approach is lightweight, requires no changes to the base model, and integrates directly with standard ML pipelines.

% \begin{algorithm}[h]
% \caption{Greedy with Early Stopping}
% \label{alg:greedy-es}
% \begin{algorithmic}[1]
% \State \textbf{Input:} estimator regular training hyper-parameters $\theta$, epoch length $\tau$
% \For{epoch $m = 1,\ldots,T$}
%     \State Receive contexts $x_{m,1},\ldots,x_{m,\tau} \in \mathcal{X}$
%     \State Split history $H_{m-1}$ into training and validation sets ($H_{tr}, H_{val}$, 50/50 split)
%     \State Train base learner with early stopping on data sets $(H_{tr}, H_{val})$ under hyper-parameters $\theta$, obtain stopping iteration $n_{\text{stop}}$
%     \State Retrain on full history $H_t$ with selected iteration $n_{\text{stop}}$
%     \For{$i = 1,\ldots,\tau$} \Comment{perform greedy selection}
%         \State Predict reward estimates $\hat{r}(a \mid x_{m,i})$ for all $a \in \mathcal{A}$
%         \State Select greedy action $a_{m,i} \gets \arg\max_{a} \hat{r}(a \mid x_{m,i})$ and receive reward $r_{m,i}$.
%     \EndFor
%     \State Update history $H_m = (x_{1,1},a_{1,1},r_{1,1},...,x_{m,\tau},a_{m,\tau},r_{m,\tau})$
% \EndFor

% \end{algorithmic}
% \end{algorithm}

\begin{procedure}[h]
\caption{Iterative learner trained under early stopping (simple cross-validation)}
\label{proc:early-stopping}
\begin{algorithmic}[1]

\State \textbf{Input:} Training data 
$H_{\text{tr}} = \{(x_i,a_i,r_i)\}$\footnotemark[1], validation data $H_{\text{val}}$;
learning rate $\eta$; maximum rounds $M$; 
maximum waiting rounds $n_{\text{wait,max}}$

\Statex \textbf{Initialization:}
\Statex \hspace*{1.5em} $F_0(x) 
= \frac{1}{|H_{\text{tr}}|}\sum_{(x_i,a_i,r_i)\in H_{\text{tr}}} r_i$
\Statex \hspace*{1.5em} $L_{\text{val}}^{(0)}
= \frac{1}{|H_{\text{val}}|}
  \sum_{(x_i,a_i,r_i)\in H_{\text{val}}}
  (r_i - F_0(x_i))^2$
\Statex \hspace*{1.5em} $L_{val,best} \gets L_{\text{val}}^{(0)}$;
                        $F_{\text{best}} \gets F_0$;
                        $n_{\text{wait}} \gets 0$

\For{$m = 1$ to $M$}

    \State $f_m\footnotemark[2] = \arg\min_{f \in \mathcal{H}}
      \sum_{(x_i,a_i,r_i)\in H_{\text{tr}}}
      (r_i - (F_{m-1}(x_i) + f(x_i)))^2$

    \State $F_m(x) = F_{m-1}(x) + \eta\, f_m(x)$

    \State $L_{\text{val}}^{(m)}
    = \frac{1}{|H_{\text{val}}|}
      \sum_{(x_i,a_i,r_i)\in H_{\text{val}}}
      (r_i - F_m(x_i))^2$

    \If{$L_{\text{val}}^{(m)} < L_{val,best}$}
        \State $L_{val,best} \gets L_{\text{val}}^{(m)}$;
               $F_{\text{best}} \gets F_m$;
               $n_{\text{wait}} \gets 0$
    \Else
        \State $n_{\text{wait}} \gets n_{\text{wait}} + 1$
    \EndIf

    \If{$n_{\text{wait}} \ge n_{\text{wait,max}}$}
        \State \textbf{break}
    \EndIf
    
\EndFor

\State \Return $F_{\text{best}}$

\end{algorithmic}
\end{procedure}

\footnotetext[1]{%
Since the estimator is trained offline, we treat $H_{\text{tr}}$ (and $H_{\text{val}}$) as unordered datasets.
This differs from the ordered tuple $H_t$ used to denote contextual bandit interaction histories.}

\footnotetext[2]{%
In practice, fitting the base learner involves hyperparameters such as tree depth, minimum samples per leaf, and other estimator-specific training options.}

\section{Exploration Behavior Analysis in a Two-Armed Bandit Case}\label{sec:MAB_reward_analysis}

We illustrate how early stopping with cross-validation naturally induces exploration.  
The key insight is that early stopping functions as a hypothesis test: a new learner is accepted only when training and validation evidence aligns, and the probability of acceptance increases with the strength of the signal, analogous to a $p$-value decreasing as evidence grows.  
This parallels the philosophy of Thompson Sampling, which acts according to its belief under uncertainty.  

In this section, we establish that the proposed method achieves regret comparable to Thompson Sampling by showing that its allocation probabilities closely match those of Thompson Sampling in a simplified two-armed binary reward setting.  
We use a tree model for illustration, but the argument extends to other aggregated learners such as neural networks.

\subsection{Notation and Specialization}
\begin{figure*}[h]
    \centering
    \includegraphics[width=0.9\textwidth]{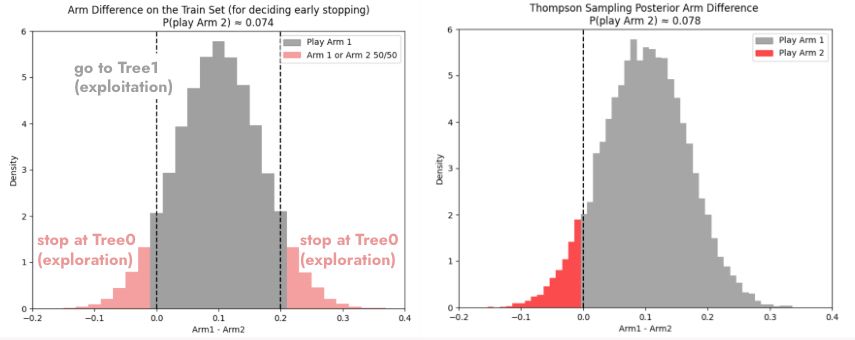}
    \caption{Allocation probability comparison between simplified two-step boosting Tree with early stopping (left) and Thompson Sampling (right), with $N_1=100$, $N_2=100$, and reward means $0.6$ and $0.5$.}
    \label{fig:ap_comparison}
\end{figure*}

To make the analysis precise in this simplified two-arm, tree-based setting, we introduce the following notation. 
Let $x \in \{1,2\}$ denote the arm indicator. More generally, $x$ can represent a higher-dimensional context including attributes of the arms. 
In this setting, a tree takes $x$ as input and outputs a prediction $\text{Tree}(x)$.  

We analyze the probability of arm assignment at an arbitrary but fixed time $t$. 
Given history $H_t$, let $N_i$ be the number of samples from arm $i$ up to time $t$, and let the empirical means on the full sample be $\bar r_1, \bar r_2$. 
Define their difference as $\Delta = \bar r_1 - \bar r_2$, and without loss of generality assume $\Delta > 0$.  

For simplicity, assume we split the history evenly into training and validation sets, $H_{tr}$ and $H_{val}$, each containing $n_i = N_i/2$ samples from arm $i$. 
Denote the training and validation reward means for arm $i$ by $\bar r_{tr,i}$ and $\bar r_{val,i}$, and the overall means by $\bar r_{tr}$ and $\bar r_{val}$.  
We also define the arm differences in the splits as $\Delta_{tr} = \bar r_{tr,1} - \bar r_{tr,2}$, $\Delta_{val} = \bar r_{val,1} - \bar r_{val,2}.$

These quantities allow us to express the predictions of Tree$_0$ and Tree$_1$, and later to derive equivalent conditions for early stopping.

\paragraph{Boosting procedure in the two-armed case.} 
Boosting begins with a degenerate tree (Tree$_0$) that predicts the overall training mean reward:
\[
\text{Tree}_0(x) = \bar{r}_{tr} = \frac{n_1 \bar{r}_{tr,1} + n_2 \bar{r}_{tr,2}}{n_1 + n_2}.
\]
The next tree (Tree$_1$) is guaranteed to split on the arm indicator and predicts deviations from $\bar{r}_{tr}$:
\[
\text{Tree}_1(x) = 
\begin{cases}
\bar{r}_{tr,1} - \bar{r}_{tr}, & x=1, \\
\bar{r}_{tr,2} - \bar{r}_{tr}, & x=2.
\end{cases}
\]

%can improve later
Thus, the mechanism can be viewed as combining early stopping with greedy action selection: if training stops at Tree$_0$, the algorithm explores uniformly, while if it continues to Tree$_1$, it exploits greedily. The decision of whether to stop, and its likelihood given the data, introduces the effective exploration probability.

\paragraph{Early stopping with cross-validation.} 
We assume log loss as the evaluation metric and a small learning rate $\eta$.  
Early stopping decides between the ensemble $F_0=\text{Tree}_0$ and the updated ensemble $F_1=\text{Tree}_0+\eta \cdot \text{Tree}_1$.  
The validation loss is
\begin{align*}
L_i &= \text{LogLoss}(F_i, H_{val}) \\
    &= -\frac{1}{n_1+n_2} \sum_{(x_t,r_t) \in H_{val}} 
       \Big[ r_t \log F_i(x_t) + (1-r_t) \log (1- F_i(x_t)) \Big].
\end{align*}

and Tree$_1$ is kept iff $L_1<L_0$; otherwise boosting stops, outputting the prediction based on Tree$_0$.  

\subsection{Analysis on allocation probability}
%see if this is too long
We analyze how the random variability introduced by early stopping translates into an allocation probability and show that it aligns closely with Thompson Sampling (TS). The key quantity is the probability of retaining Tree$_1$: acceptance yields greedy exploitation, while rejection (reverting to Tree$_0$) yields uniform exploration. Thus, this probability fully determines the induced allocation rule.  

We show, through a sequence of equivalent conditions, that the event of accepting Tree$_1$ can ultimately be expressed in terms of the training difference $\Delta_{tr}$. This reformulation allows us to analyze its distribution under the null and connect early stopping directly to classical hypothesis testing, as summarized in the following proposition:

\begin{proposition}[Equivalence of early stopping conditions]
In the two-armed setting with balanced splits, the following events are equivalent:
\begin{enumerate}
    \item Tree$_1$ is accepted (i.e., it reduces validation log-loss);
    \item The training and validation differences agree in sign: 
    \[
    \operatorname{sign}(\Delta_{tr}) = \operatorname{sign}(\Delta_{val});
    \]
    \item The training difference $\Delta_{tr}$ lies in the interval $(0,2\Delta)$.
\end{enumerate}
\end{proposition}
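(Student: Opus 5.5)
The plan is to prove (1)$\Leftrightarrow$(2) by a first-order expansion of the validation log-loss in the learning rate $\eta$, and (2)$\Leftrightarrow$(3) by an exact algebraic identity coming from the balanced split.

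\emph{Step 1: linearize $L_1-L_0$ in $\eta$.} Write $p_0=\bar r_{tr}=F_0(x)$, assumed to lie in $(0,1)$ so that the log-loss is differentiable. The per-sample derivative of the log-loss with respect to the prediction $p$ is $(p-r)/(p(1-p))$. Since $F_1=F_0+\eta\,\text{Tree}_1$,
\[
L_1-L_0=\frac{\eta}{(n_1+n_2)\,p_0(1-p_0)}\sum_{i=1}^{2} n_i\,(p_0-\bar r_{val,i})(\bar r_{tr,i}-\bar r_{tr})+O(\eta^2).
\]
Because Tree$_1$ is fit on the training split, its weighted residuals sum to zero: $\sum_i n_i(\bar r_{tr,i}-\bar r_{tr})=0$. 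This lets me replace $p_0$ by $\bar r_{val}$ inside the sum, turning it into a centered cross-product $-\sum_i n_i(\bar r_{val,i}-\bar r_{val})(\bar r_{tr,i}-\bar r_{tr})$.

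\emph{Step 2: evaluate the cross-product for two arms.} With $N=n_1+n_2$ we have
\[
\bar r_{tr,1}-\bar r_{tr}=\tfrac{n_2}{N}\Delta_{tr},\qquad \bar r_{tr,2}-\bar r_{tr}=-\tfrac{n_1}{N}\Delta_{tr},
\]
and the analogous expressions hold on the validation side. The sum then collapses to $n_1n_2\,\Delta_{tr}\Delta_{val}/N$. Hence, to leading order in $\eta$, the sign of $L_1-L_0$ is the sign of $-\Delta_{tr}\Delta_{val}$, and $L_1<L_0$ iff $\Delta_{tr}\Delta_{val}>0$, which is condition (2).

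\emph{Step 3: (2)$\Leftrightarrow$(3).} With each arm split evenly, $\bar r_i=(\bar r_{tr,i}+\bar r_{val,i})/2$, so $\Delta_{val}=2\Delta-\Delta_{tr}$. Since $\Delta>0$, the two differences cannot both be negative, because their sum is $2\Delta>0$. Therefore they share a sign iff both are positive, i.e.\ iff $0<\Delta_{tr}<2\Delta$.

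The main obstacle is Step 1. The equivalence with (1) is exact only to first order in $\eta$, which is where the standing assumption of a small learning rate enters. The $O(\eta^2)$ term can flip the comparison only when $|\Delta_{tr}\Delta_{val}|$ is of order $\eta$, i.e.\ in a vanishing neighbourhood of the boundary of the interval in (3). Boundary ties ($\Delta_{tr}=0$ or $\Delta_{tr}=2\Delta$) are treated as rejection, consistent with the strict inequality $L_1<L_0$ in Procedure~\ref{proc:early-stopping}. I would state the proposition as holding for all sufficiently small $\eta$ away from these ties, or equivalently as the $\eta\to0$ limit.
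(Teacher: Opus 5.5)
Your proposal is correct and is essentially the paper's own argument. The paper likewise differentiates the validation log-loss at the Tree$_0$ prediction along the mean-preserving Tree$_1$ direction; its shift $\delta$ on arm~1 and $-\tfrac{n_1}{n_2}\delta$ on arm~2 is exactly your zero-weighted-residual observation. This gives, to first order, $L_1-L_0\propto-\Delta_{tr}\Delta_{val}$, and the paper then uses the identity $(\Delta_{tr}+\Delta_{val})/2=\Delta$ for (2)$\Leftrightarrow$(3). Your small-$\eta$ and tie caveats make explicit what the paper leaves implicit, and you could sharpen them: the log-loss is convex along the update line, so rejection when the signs disagree holds exactly for every $\eta$, and only acceptance can fail, and only when $\Delta_{val}=O(\eta\,\Delta_{tr})$, i.e.\ near the endpoint $\Delta_{tr}=2\Delta$.
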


Moreover, the statistic $\Delta_{tr}$ can be rescaled to follow an asymptotic Normal distribution, allowing early stopping to be viewed as a classical hypothesis test.

\begin{theorem}[Asymptotic distribution of the training difference]
Let the true arm means be $\mu_1$ and $\mu_2$, and consider the null hypothesis $H_0:\mu_1=\mu_2$. 
If the sample ratio $N_1/N_2$ is held fixed, then as $N_1,N_2 \to \infty$,
\[
\frac{\Delta_{tr} - \Delta}{\sigma_\Delta} \;\;\xrightarrow{d}\;\; \mathcal{N}(0,1),
\]
where 
\[
\sigma_\Delta^2 \;=\; \frac{\mu_1(1-\mu_1)}{N_1} \;+\; \frac{\mu_2(1-\mu_2)}{N_2}.
\]
\end{theorem}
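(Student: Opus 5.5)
The plan is to reduce $\Delta_{tr}-\Delta$ to a difference of two independent half-sample contrasts, one per arm, and then apply the classical central limit theorem arm by arm.

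\textbf{Step 1: algebraic reduction.} Because the split is balanced ($n_i=N_i/2$ on each side), the full-sample mean of arm $i$ is the average of its two half-sample means, $\bar r_i=\tfrac12(\bar r_{tr,i}+\bar r_{val,i})$. Hence $\Delta=\tfrac12(\Delta_{tr}+\Delta_{val})$, and therefore
\[
\Delta_{tr}-\Delta=\tfrac12(\Delta_{tr}-\Delta_{val})=\tfrac12\big[(\bar r_{tr,1}-\bar r_{val,1})-(\bar r_{tr,2}-\bar r_{val,2})\big].
\]
The key observation is that each bracket compares two disjoint halves of the same arm's data. Each bracket therefore has mean zero whatever the true $\mu_1,\mu_2$ are. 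So the centering at $\Delta$ holds in general, and in particular under $H_0$.

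\textbf{Step 2: variance.} I model the rewards of arm $i$ as i.i.d.\ Bernoulli$(\mu_i)$, independent across arms. Then $\bar r_{tr,i}$ and $\bar r_{val,i}$ are independent, and each has variance $\mu_i(1-\mu_i)/n_i=2\mu_i(1-\mu_i)/N_i$. It follows that
\[
\operatorname{Var}(\bar r_{tr,i}-\bar r_{val,i})=4\mu_i(1-\mu_i)/N_i .
\]
Combining the two independent arms and the factor $\tfrac12$ gives
\[
\operatorname{Var}(\Delta_{tr}-\Delta)=\tfrac14\cdot 4\Big(\tfrac{\mu_1(1-\mu_1)}{N_1}+\tfrac{\mu_2(1-\mu_2)}{N_2}\Big)=\sigma_\Delta^2,
\]
which matches the stated constant.

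\textbf{Step 3: CLT and combination.} For each arm, $\sqrt{N_i}\,(\bar r_{tr,i}-\bar r_{val,i})$ is a normalized difference of two independent i.i.d.\ sample means. By Lindeberg--L\'evy it converges to $\mathcal N\big(0,4\mu_i(1-\mu_i)\big)$.

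Now hold $N_1/N_2=c$ fixed. Write $(\Delta_{tr}-\Delta)/\sigma_\Delta$ as a combination of the two normalized arm statistics, with weights
\[
w_i=\frac{\tfrac12 N_i^{-1/2}}{\sigma_\Delta}.
\]
These weights depend only on $c,\mu_1,\mu_2$, so they are constant along the sequence. The two arm statistics are independent and each is asymptotically normal, so their joint limit is bivariate normal with diagonal covariance. A fixed linear combination of that limit is normal. By construction its variance is $1$, which gives $(\Delta_{tr}-\Delta)/\sigma_\Delta\xrightarrow{d}\mathcal N(0,1)$. If one prefers to plug in empirical variances, Slutsky's lemma handles the replacement. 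This requires $0<\mu_i<1$, so that $\sigma_\Delta>0$.

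\textbf{Main obstacle: the source of randomness.} The paper's narrative attributes exploration to the random split given a fixed history $H_t$. Under that reading the data are fixed, and $\bar r_{tr,i}$ is the mean of a uniformly random half of a fixed 0/1 population. I would first prove the sampling-model version above, then handle the conditional version through the finite-population (Hájek/hypergeometric) CLT. For a sample of half the population, the without-replacement variance of $\bar r_{tr,i}$ is $\frac{\hat p_i(1-\hat p_i)}{n_i}\cdot\frac{N_i-n_i}{N_i-1}\approx \hat p_i(1-\hat p_i)/N_i$, where $\hat p_i$ is arm $i$'s empirical success rate. In this setting $\bar r_{tr,i}-\bar r_i=\tfrac12(\bar r_{tr,i}-\bar r_{val,i})$, so $\Delta_{tr}-\Delta$ is a difference of two independent hypergeometric deviations with total variance $\approx\sum_i \hat p_i(1-\hat p_i)/N_i$. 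Since $\hat p_i\to\mu_i$ almost surely, this equals $\sigma_\Delta^2(1+o(1))$, and the same normal limit follows, conditionally on $H_t$ for almost every data sequence.

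Verifying the Lindeberg-type condition for the hypergeometric CLT is routine for bounded rewards with $\hat p_i$ bounded away from $0$ and $1$. That verification is the only delicate point.
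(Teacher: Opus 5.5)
Your proof is correct. Its final paragraph is essentially the paper's argument, while your main line takes a different route.

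The paper argues only conditionally on the history:
\begin{itemize}
\item it uses the same identity $\tfrac12(\Delta_{tr}+\Delta_{val})=\Delta$;
\item it treats each $\bar r_{tr,i}$ as a hypergeometric mean centered at $\bar r_i$ with variance $\approx \bar r_i(1-\bar r_i)/N_i$;
\item it then simply invokes ``the central limit theorem''.
\end{itemize}
It never says which CLT covers sampling without replacement, nor how $\bar r_i(1-\bar r_i)$ becomes the $\mu_i(1-\mu_i)$ in $\sigma_\Delta^2$. Its displayed finite-population formula is also off by the factor $\tfrac12$, which the subsequent $\approx$ silently drops. Your expression $\frac{\hat p_i(1-\hat p_i)}{n_i}\cdot\frac{N_i-n_i}{N_i-1}$ is the correct one.

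Your unconditional model treats the two halves of each arm as independent i.i.d.\ samples. This buys a fully rigorous proof from the ordinary Lindeberg--L\'evy CLT. It also makes the paper's ``subtle'' variance coincidence transparent: $\Delta_{tr}-\Delta=\tfrac12(\Delta_{tr}-\Delta_{val})$, which has variance $\tfrac14\cdot 4\sigma_\Delta^2$.

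That version does not, by itself, deliver the reading the paper actually uses. The paper's claim is that, for a fixed history (so fixed $\Delta$), the probability over the random split of rejecting Tree$_1$ equals the $p$-value. That requires the conditional (H\'ajek) version, which your last paragraph supplies more carefully than the paper does. You identify the variance with $\sigma_\Delta^2$ via $\hat p_i\to\mu_i$ and the fixed ratio, and you flag the Lindeberg-type condition. That condition indeed holds: for $0/1$ data the largest squared deviation is at most $1$, while $\sum_j (x_j-\bar x)^2=N_i\hat p_i(1-\hat p_i)\to\infty$ and $n_i/N_i=\tfrac12$.
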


\noindent
It is worth emphasizing that this result is not a trivial application of the central limit theorem. 
The normalization uses $\sigma_\Delta$, the variance of the full-sample difference $\Delta$, while the numerator involves both $\Delta_{tr}$ and $\Delta$, which are correlated. 
The key fact is that the asymptotic variance nevertheless coincides with that of $\Delta$ itself. 
This subtle property is what ensures that the early stopping rule induces the correct allocation probabilities.

\noindent
Theorem~1 implies that, asymptotically, the probability of rejecting Tree$_1$ (and reverting to Tree$_0$ with uniform exploration) is equal to the two-sided $p$-value of the hypothesis test $H_0:\mu_1=\mu_2$. 
This is illustrated in Figure~\ref{fig:ap_comparison} (right), where the two tails correspond to rejection regions. 
In this case, the chance of playing the worse arm (arm~2 in our example) is $1/2$, so the allocation probability to arm~2 is equivalent to the one-sided $p$-value (half of the corresponding two-sided $p$-value). 
By contrast, Thompson Sampling assigns probability to the worse arm according to the posterior probability that it is optimal, which corresponds to the $p$-value of an equivalent Bayesian test, as shown in Figure~\ref{fig:ap_comparison} (left) . 
The two methods differ only marginally, due to the prior (Bayesian vs.\ frequentist) formulation.

\begin{figure}[h]
    \centering
    \includegraphics[width=0.75\linewidth]{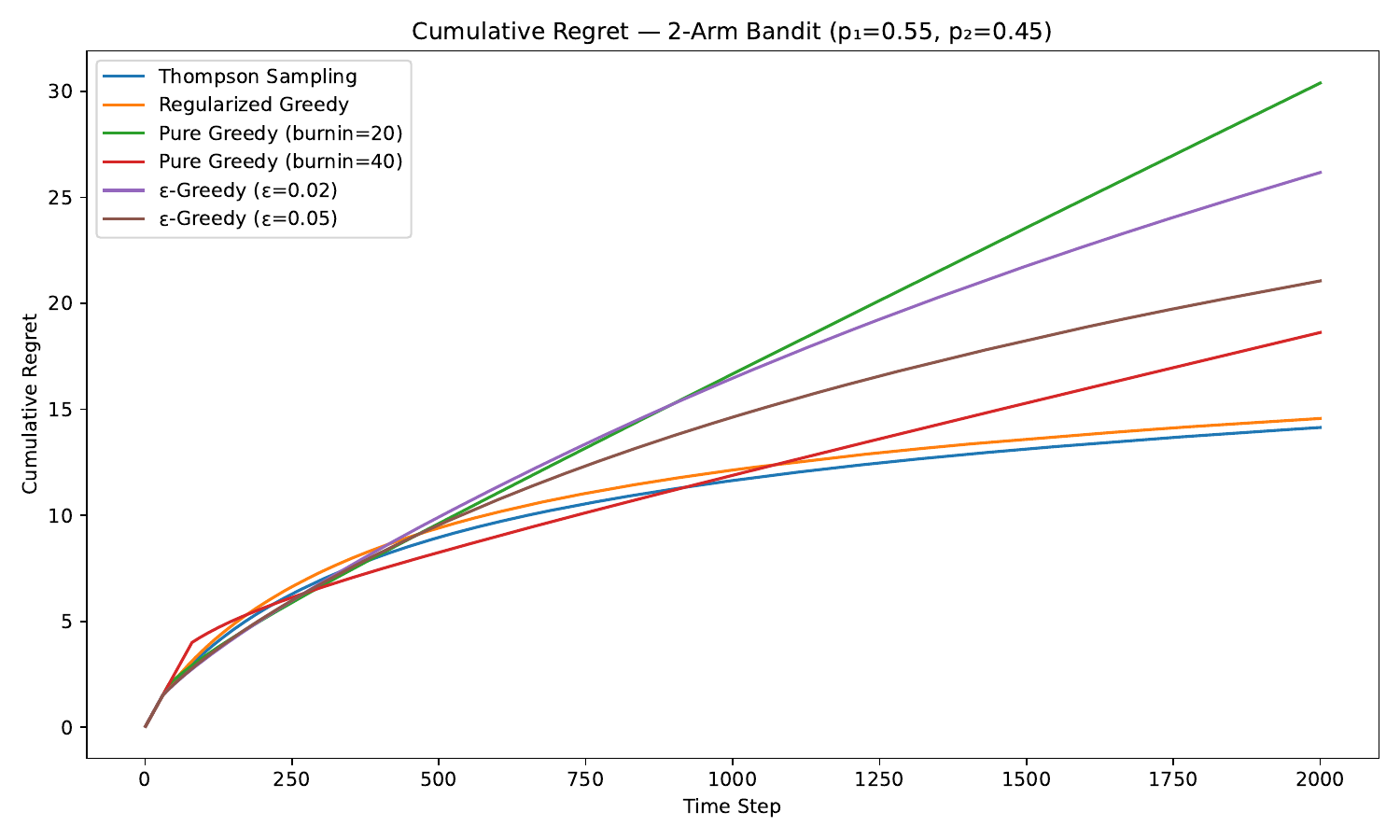}
    \caption{Cumulative Regret --- Stationary Two-Armed Bandit. Bernoulli arms with true means $0.55$ and $0.45$. Thompson Sampling uses a Beta$(0.1, 0.1)$ prior to avoid giving it an informational advantage over the regularized greedy, whose prior mean would otherwise coincide with the true arm means under a uniform prior.}
    \label{fig:reward_comp}
\end{figure}

\section{Data and Simulation Setup}

\subsection{Data Set Description}

\paragraph{Background.} 
In our simulation study, we construct a ground-truth reward function using a real-world business dataset derived from a large-scale digital marketing email promotion campaign. The objective of this campaign is to increase the company's Net Present Value (NPV) by sending targeted offers that encourage customers to extend their service contracts. Each record in the dataset represents a decision instance in which a personalized offer was sent to a customer, and the subsequent conversion outcome was observed. The reward is defined as $1$ if the customer extended their service contract within three days after receiving the promotion email and $0$ otherwise. 
This is a sparse data with conversion rate $0.003$.

\paragraph{Action and Contextual Features.} 
The dataset contains approximately 200{,}000 decision instances collected between January and July 2025. Each instance includes 113 contextual features describing user history and engagement behavior,
%TODO: ([DON'T FORGET THIS:] check if we have environmental features and add here with examples),
such as recency and frequency of interactions, tenure, and customer type. The action space consists of 17 offer-related features (e.g., discount percentage, number of free months, credit value), which together define 50 unique offer compositions. Some action features are defined jointly to maintain realistic marketing constraints—for example, higher percentage discounts are paired with lower credit values to avoid overly generous offers.  
In the simulation study, the underlying contextual feature distribution $\mathcal{X}$ is induced by randomly sampling from all 200{,}000 instances in the dataset.

\subsection{Simulation Setup}

\paragraph{Ground-Truth Reward Function.} 
To generate the ground-truth reward function, we construct a tabular dataset and train a gradient-boosted classification tree model, denoted by $F_{\text{email}}$. For any context–action pair $(a, x)$, the model predicts a conversion probability $F_{\text{email}}(a, x) \in (0,1)$. The observed reward is then simulated as
\[
R \sim \text{Bernoulli}(p = F_{\text{email}}(a, x)).
\]
The observed conversion rate across all actions is approximately $0.3\%$, reflecting the sparsity typical in large-scale marketing datasets. To reduce computational cost while maintaining signal balance, we do random subsample to augment mean rewards.

We construct both a simplified setting and the full setting. In the simplified setting we randomly keep only 5 action features and 2 contextual features in our original dataset, and we augment mean reward in the training dataset to $0.3$. In the full stimulation setting, to make it closer to the actual scenario, we keep about $5\%$ of the zero-reward cases while keeping all ones, yielding a marginal expected reward of approximately $0.06$ in the resulting training set.
% TODO in the above you're using percentage on one place and 0.05 in another place. Choose one single format

\paragraph{Simulation Epoch Schedule.} 
The simulation begins with a burn-in phase of 1{,}000 contextual samples, where actions are uniformly drawn from the action bank $\mathcal{A}$. The burn-in is followed by 150 epochs (100 in the stationary setting), each containing 100 samples. The exploration algorithm and its estimator are updated once per epoch. This configuration represents a typical daily learning schedule, where models are retrained or updated on a daily basis in practice.
% You haven't defined stationary setting, yet you have it here. 

\paragraph{Non-Stationary Reward Construction.} 
To simulate reward drift (i.e., simulate changes in customer preferences), we train two ground-truth reward functions: $F_0 = F_{\text{email,init}}$ and $F_1 = F_{\text{email,end}}$. The function $F_{\text{email,end}}$ corresponds to the ground truth reward model described earlier. To construct $F_{\text{email,init}}$, we introduce partial disagreement with $F_{\text{email,end}}$ while preserving the overall mean reward. Specifically, within the original training dataset, we randomly select a fraction $q$ of the positive-reward observations (reward $=1$), flip them to zero, and reassign the same number of positive rewards to randomly chosen negative samples (reward $=0$). This preserves the mean reward but generates a modified signal that leads to a different reward mapping. In our simulation we let $q=0.5$. Note that this doesn't inflict a huge reward function shift, since most rewards are still $0$, unchanged.
%We consider three levels of drift severity, with $q \in \{0.2, 0.5, 0.8\}$ representing small, medium, and large shifts, respectively.

\paragraph{Reward Change Schedule.} 
We simulate a gradual transition between $F_0$ and $F_1$ to mimic a non-stationary environment. For the first 1.5 months (epochs $t < 45$), rewards are generated by $F_0$. Over the next two weeks (epochs $45 \le t \le 60$), the reward function transitions linearly from $F_0$ to $F_1$, after which $F_1$ remains stable for the remainder of the simulation (approximately three months). Formally, the reward function at epoch $t$ is defined as
\[
F_t(a,x) =
\begin{cases}
F_0(a,x), & t < 45, \\
\frac{F_0(a,x)\,(60 - t) + F_1(a,x)\,(t - 45)}{15}, & 45 \le t \le 60, \\
F_1(a,x), & t > 60.
\end{cases}
\]

\subsection{Algorithms and Modifications}

\paragraph{Modification for Non-Stationary Scenarios.}
In the non-stationary simulation setting, we apply a rolling time-buffer mechanism to all algorithms, retaining only the most recent 1.5 months of data. This corresponds to keeping the latest 4{,}500 observations for model updates, allowing the learner to remain adaptive to recent trends while discarding outdated information. This design mirrors the data refresh cycle used in the real email campaign use case, where older data gradually loses relevance.

\paragraph{Algorithms Considered.}
We evaluate our proposed algorithm using its default parameter setting, $\alpha = 1$. This choice reflects the standard practice of performing early stopping with cross-validation without any explicit modification, allowing us to assess the intrinsic exploration behavior that arises naturally from regular model training.

For comparison, we include two additional baselines:  
(1) FALCON variant \citep{SimchiLeviXu2022}, (2) KL-EXP variant (i.e. the simple EXP algorithm) \citep{zhao2025sharp}, and
(3) $\varepsilon$-greedy, where $\varepsilon$ percent of actions are chosen uniformly at random from all available arms.
Both the original FALCON$+$ and the KL-EXP algorithm for general function classes relies on progressively increasing epoch lengths and discarding all past data at the end of each epoch. While this structure is theoretically advantageous for achieving asymptotic regret guarantees, it performs poorly in stationary cases without time windows (since older data are repeatedly discarded) and is fundamentally incompatible with the sliding-window framework used in non-stationary settings. More specifically, the sliding-window approach updates models continuously using a fixed-size buffer that gradually shifts over time. In contrast, FALCON$+$ and KL-EXP requires restarting training with increasingly longer epochs, causing updates to become less frequent and discrete over time.\\ 

Therefore, we adopt their core components. For Falcon, we take its allocation probability form that stem from the A/BW algorithm \citep{abe1999analysis}:
\[
P(a) = \frac{1}{K + \gamma(n)\,(\hat{r}_{\max} - \hat{r}_a)},
\]
where $K$ is the number of available actions, $\hat{r}_{\max}$ and $\hat{r}_a$ denote the estimated rewards for the best and current arms, respectively, and $\gamma(n)$ is a time-scaling factor. Following FALCON’s formulation, we set
\[
\gamma(n) = c \sqrt{K n},
\]
where $n$ is the number of samples in the current training buffer and $c$ is an algorithm parameter. Note that, we absorb the function class cardinality $|\mathcal{F}|$ into our $c$, since it is unknown (assuming it is actually finite) in our setting. We try a range of $c$ to set the best benchmark.

Similarly, for KL-EXP, we take its simple EXP form that has allocation probability: 
\[
P(a) \propto  \exp(\eta \cdot \hat{r}_a).
\]
% --- end commented-out section ---

\section{Result}\label{sec:result}
%TODO: check how to split and name differnt sections

\subsection{Simulation illustration and analysis of the bandit exploration behavior induced by early-stopping}
\begin{figure}[h]
    \centering
    \includegraphics[width=\linewidth]{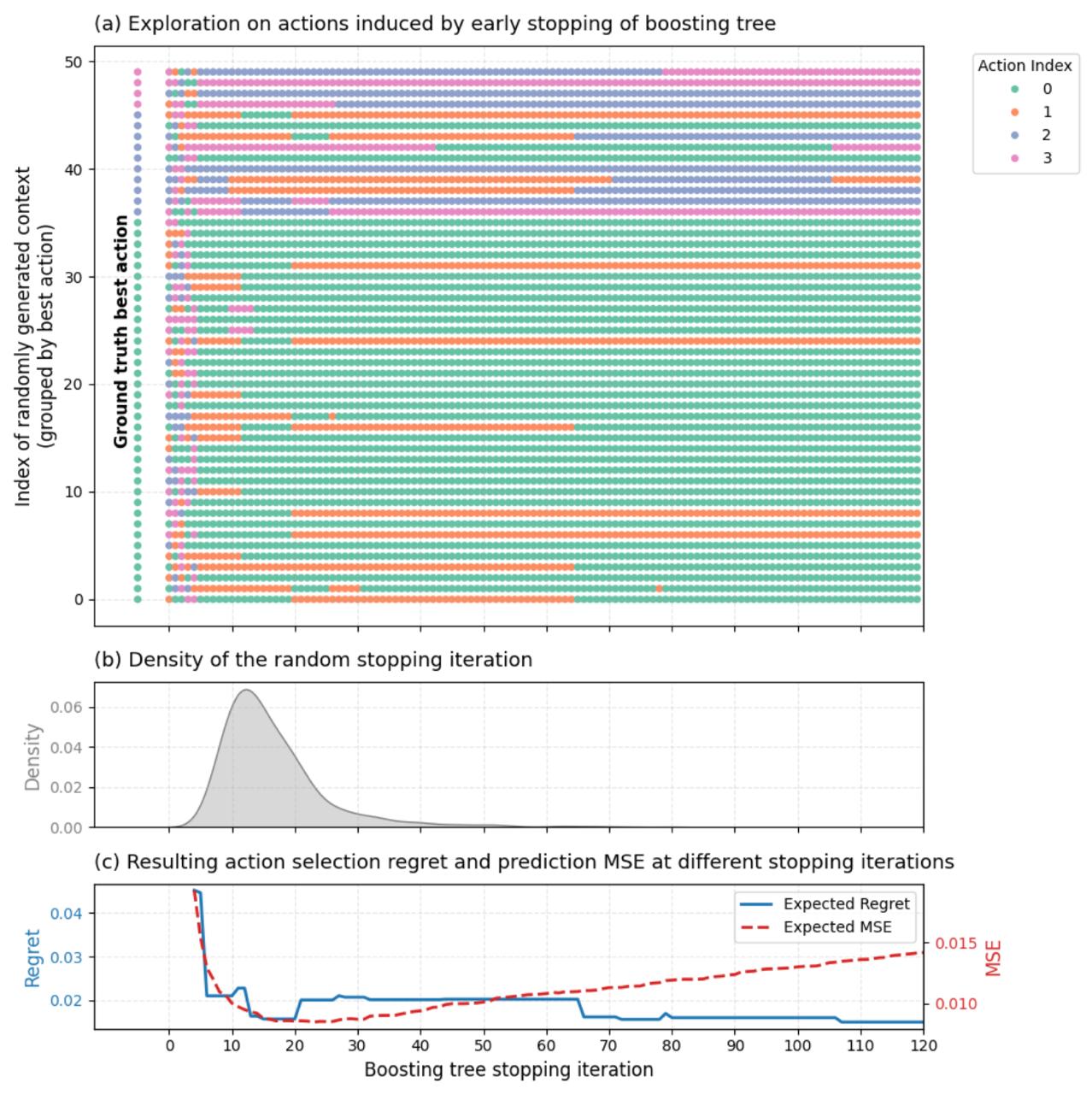}
    \caption{
    We train a gradient boosting tree under early-stopping (see Procedure~\ref{proc:early-stopping}) to 1{,}000 burn-in samples with contextual feature vectors drawn from the email promotion dataset. 
    By repeatedly running the algorithm, the resulting boosting model stops at different iterations due to stochasticity in the early-stopping process. 
    To illustrate its behavior and performance metrics, we show: 
    (a) the selected actions (indicated by color) when the boosting tree is stopped at different iterations; 
    (b) the distribution of stopping iterations observed during cross-validation early stopping; and 
    (c) the mean squared error (MSE) and regret associated with estimation and action selection at each iteration, evaluated using the ground-truth reward function.
    }
    \label{fig:iter_cv_illustration}
\end{figure}

Figure~\ref{fig:iter_cv_illustration} provides a comprehensive view of the dynamics that occur across boosting iterations and the resulting exploration behavior. We discuss the figure from several complementary perspectives.

First, by repeatedly applying our algorithm to the same data history, we observe variability in the resulting stopping iteration, driven by the randomness of the training–validation split. 
Figure~\ref{fig:iter_cv_illustration}(b) shows the empirical distribution of these stopping iterations. 
Figure~\ref{fig:iter_cv_illustration}(a) illustrates the actions that would be selected if the model were stopped at different iterations, evaluated over 50 randomly sampled contextual vectors. 
These contexts are grouped according to their best actions under the ground-truth reward function (shown at the left end of the plot). 
For clarity, we visualize only four actions, among which one (Action~1) is consistently suboptimal across all 50 contexts. 
The plot shows that stopping at different iterations indeed induces exploration across multiple arms. When Action~0 (shown in green) is the ground-truth best, the algorithm tends to exploit it consistently, with only limited exploration toward the orange arm—likely because Action~0 demonstrated a clear advantage during the burn-in phase, resulting in high confidence. In contrast, when Action~1 or~3 is the ground-truth best, the algorithm exhibits more active exploration between these two actions, as their estimated values are often similar. This behavior illustrates that the algorithm’s exploration primarily occurs among near-optimal actions, while maintaining strong exploitation when one action is clearly superior. Overall, the resulting selection patterns closely align with the ground-truth optimal actions.

Second, Figure~\ref{fig:iter_cv_illustration}(c) shows the mean squared error (MSE) and regret when the full model is truncated at various iterations. 
Combining panels (b) and (c), several observations emerge. 

\paragraph{Distinction and Relation between MSE and Regret.} 
The stopping locations are centered around the iteration that minimizes the ground-truth MSE, which aligns with standard machine-learning intuition. 
However, the regret curve behaves differently: regret is not minimized at exactly the same iteration as the MSE, and even when the model continues training far beyond the MSE optimum—incurring substantial overfitting—the regret changes only slightly and may even decrease marginally. 
This illustrates the distinction between minimizing MSE (an estimation objective) and minimizing instantaneous regret (a decision-making objective). 
Importantly, this distinction does not contradict our main argument that the early-stopping mechanism serves as a natural source of exploration for reducing long-term regret.  

To draw an analogy with the two-armed bandit setting, early stopping provides a favorable exploration–exploitation trade-off but does not maximize the immediate expected reward. 
Maximizing instant reward would correspond to always selecting the arm with the highest estimated mean—analogous to continuing training until full overfitting occurs.

Figure~\ref{fig:iter_cv_illustration}(c) also suggests that an over-fitted estimator can serve as a more faithful representation of a purely greedy policy with no exploration. 
This perspective is often overlooked in the literature, where the learning subroutine is delegated to an oracle and implicitly assumed to follow standard machine-learning training practices (e.g., with cross-validation and early stopping). 
In our subsequent simulations, we therefore adopt the fully over-fitted estimator as the pure-greedy benchmark and show that it indeed achieves higher short-term reward when only a brief time buffer is applied, compared to a cross-validated estimator trained with fewer iterations—a behavior consistent with the expected tendency of greedy algorithms.

% \paragraph{Insight for Algorithm Parameter Design.} 
% Since overfitting can sometimes yield higher instantaneous rewards, we design our algorithm to include a multiplier on the stopping iteration. 
% Specifically, the stopping point determined by early stopping is scaled by a factor $\alpha > 1$, allowing the model to train longer and emphasize exploitation when appropriate. 
% In environments where the reward function changes rapidly over time—requiring quicker adaptation and stronger exploitation of recently learned patterns—a larger multiplier (e.g., $\alpha = 1.5$ or $2$) is preferred.

\subsection{Algorithm empirical evaluation}

\begin{figure}[h]
    \centering
    \includegraphics[width=\linewidth]{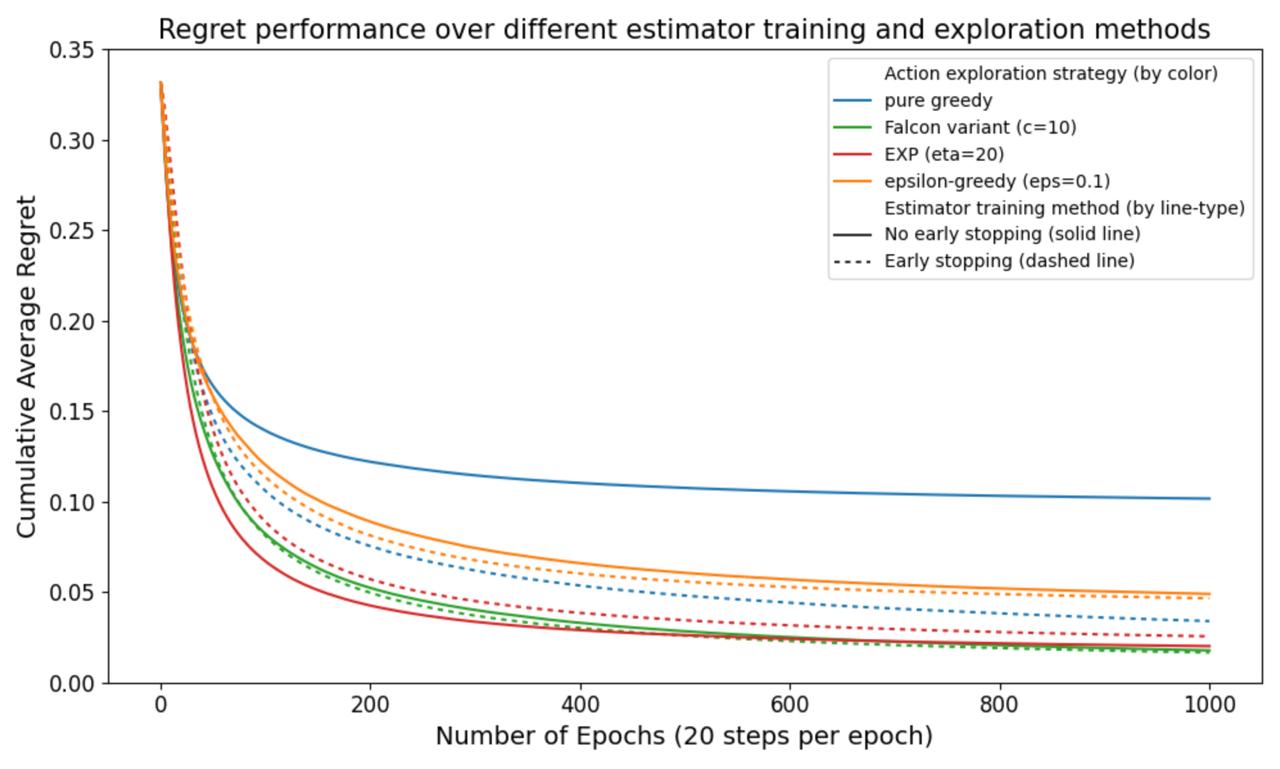}
    \caption{
    Regret performance in a simple stationary setting with only 5 action features (18 combinations) and 2 contextual features. We compare estimators trained using standard regularization approach (early stopping via cross validation, stops after 15-25 iterations) with estimators trained with fixed 30 iterations. And we combine additional exploration strategies upon them. The parameters for Falcon variant and EXP are optimized using a grid search.
    }
    \label{fig:regret_simple}
\end{figure}

\begin{figure}[h]
    \centering
    \includegraphics[width=\linewidth]{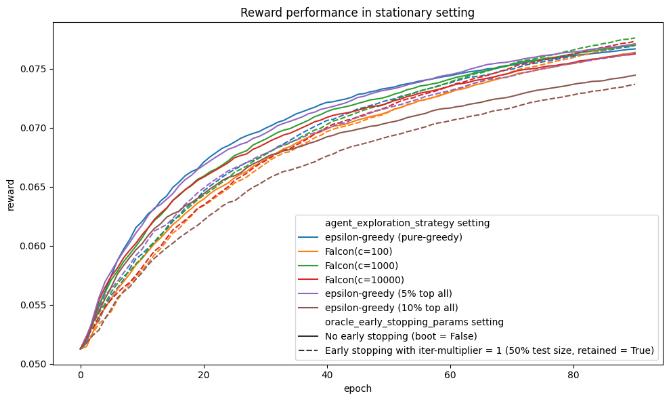}
    \caption{
    Reward performance in the stationary setting. All algorithms perform similarly, suggesting that little explicit exploration is needed, as the diversity of contextual features already induces sufficient implicit exploration across actions.
    }
    \label{fig:reward_st}
\end{figure}

%TODO: edit plot as Victor suggested. Also, try to estiamte the amount of exploration needed for EXP or Falcon
\begin{figure}[h]
    \centering
    \includegraphics[width=\linewidth]{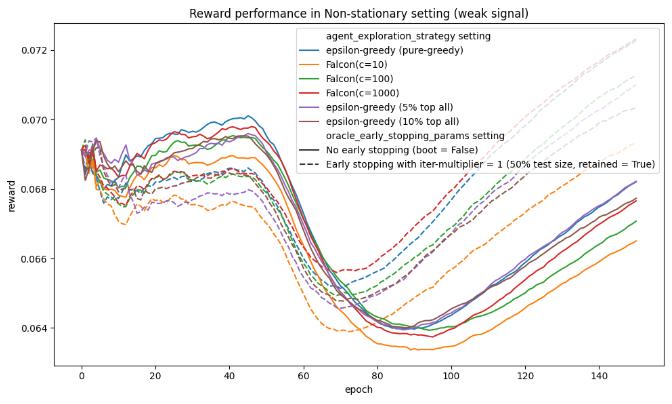}
    \caption{
    Reward performance in the non-stationary setting. The early-stopping–based oracle adapts more quickly to changes in the reward distribution, whereas additional explicit exploration provides limited or no improvement.
    }
    \label{fig:reward_nonsta}
\end{figure}

\begin{figure}[h]
    \centering
    \includegraphics[width=\linewidth]{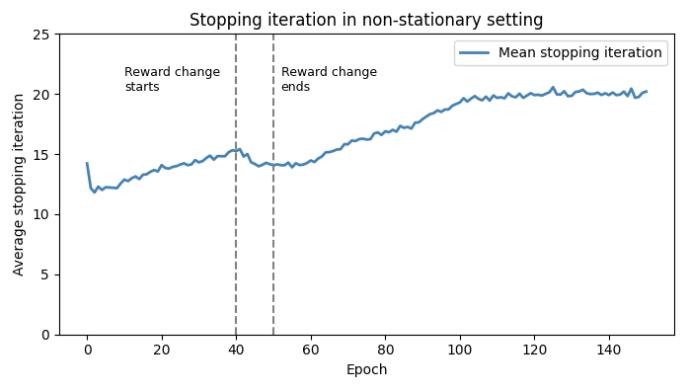}
    \caption{
    Average stopping iteration of the early-stopping oracle across different epochs. 
    The dashed vertical lines indicate the start and end of the reward distribution change.
    }
    \label{fig:stop_iter}
\end{figure}

% Two main insights emerge from the results: the estimator training method (determining their number of iterations) crucially shapes exploration behavior, and additional exploration such as $\varepsilon$-greedy offers no further gains.
We start by looking into stationary scenarios. We compare the combinations of three meta-exploration methods under two estimator training procedures.
The three meta-exploration methods are: (1) FALCON-based exploration, (2) $\varepsilon$-uniform exploration, and (3) no additional exploration (pure greedy).
The estimators are either trained with early stopping and retraining (without an iteration multiplier) or an unregularized models, representing fitting processes that more greedily exploit the data.

We first evaluate different estimator fitting processes and bandit algorithms in a simplified contextual setting (with 5 action features and 2 contextual features). We see in Figure~\ref{fig:regret_simple} that the unregularized estimator with a pure-greedy selection strategy performs poorly. However, the performance improves significantly when we either use a regularized estimator or add EXP or Falcon-like exploration. We acknowledge that if we manually search and optimize the algorithm parameters, both Falcon and EXP can achieve better reward performance than purely using the regularized estimation with pure-greedy selection in this simple contextual bandit case. Nevertheless, as we will show in more complex cases with additional contextual features (closer to our real business settings), such distinctions between using regularized estimator along and adding additional exploration strategy can become negligible. On the other hand, for naive exploration strategies such as epsilon-greedy, although they attempt to add more exploration to improve long-term rewards, their exploration is not efficient and thus can backfire on the reward. Note that this can also happen if we do not set the parameters for Falcon or EXP properly and end up with too much exploration. In other words, the regularized estimator already provides a good amount of exploration, and adding too much additional exploration can be counterproductive. Even in cases where the regularied estimate with greedy performs as well as alternative bandit algorithms, then the proposed approach is still desirable because it does so without needing any additional exploration strategy; sufficient exploration is a consequence of the early-stopping in the regularization estimator.

Then we go to the scenario where we use all 27 action features and 113 contextual features to construct the simulation reward function. We first consider the stationary scenario where the reward truth doesn't change over time. As shown in Figure~\ref{fig:reward_st}, the reward performance for different oracles and algorithms does not show a significant advantage over the pure greedy without early stopping (i.e., the algorithm with no exploration at all).
In addition, in early stages, the pure greedy without early stopping can perform best. This aligns with the composition of two analysis conclusions:

(1) The richness of contextual features gives all actions a good chance to be sampled, thereby introducing strong passive exploration (as suggested by \citep{bastani2021mostly,bietti2021bakeoff}); (2) It aligns with our analysis in Figure~\ref{fig:iter_cv_illustration} that overfitting can have more reward advantage than the estimator trained under usual ML standards.

% This also highlights the importance of our algorithm parameter $\alpha$, which provides an additional degree of exploitation that cannot be achieved otherwise (since previous work did not discuss overfitted estimators and offered no control over this aspect).

The fact that in scenarios with rich contextual features there is often no need for proactive exploration emphasizes the importance of further evaluating algorithms in the non-stationary case. In other words, the stationary case may not be a good indicator of whether an exploration strategy is effective.

This motivates the simulation on a non-stationary case (as shown in Figure~\ref{fig:reward_nonsta}). The results highlight several key findings. First, when the underlying reward function changes (around epoch~40), the early-stopping estimator adapts faster by detecting the shift more cautiously and exploring accordingly. This behavior is observed across all meta-algorithms and reflects the robustness of early-stopping regularization in non-stationary environments.

Second, we observe that adding additional meta-exploration on top of early-stopping estimators provides little to no improvement in cumulative reward, and in some cases even degrades performance.

Figure~\ref{fig:stop_iter} further illustrates the exploration behavior from the perspective of the oracle’s stopping iteration. When the underlying reward function starts to shift—introducing conflicting signals in the data buffer—we see that the average stopping iteration begins to decrease, meaning the oracle learns less deeply from the data and explores more. After the shift stabilizes, the iterations start to increase again. This illustrates the intrinsic exploration induced by early stopping.

Taken together, these results suggest that early stopping alone already induces a sufficient degree of exploration, making more complex meta-exploration strategies unnecessary in this context.

\section{Discussion}

%josh: put a main point for each 

%check: 
%compare the two idea:
%1. our contribution is 

%Existing state-of-art appraoch addresses contextual bandit problem in two steps: oracle training followed by exploration strategy applied on top of the trained estimator. In this work, we proposed a novel direction that combine them into one step. This is achieved by ultilizing the intisic Thompson Sampling liked exploration of the iteration based learner when using early stoppping as.
%Q: there seems to have too many conditions...

%In complex real world scenario, iteration based learneer, such as boosting tree and NN, are widely used. They provide a felxbilie way to approximte into reward turth by taking small steps, yet their balckbox nature

%CB without explicit algorithm for exploration and exploration one past could potentially lead to a good reward regret outcom. our work is related in X-Y. More boardly, our work point to a tech, you can do by X, you don't need to implicitly do exploration. Use existing model
%most paper need TS
%our work is better 
%by somethign 

%another point: limitation. what is the function of limitation:
%honest what the method does and doesn't (be precise
%good to get it accepted (if I don't but reviewer
%I can explain why this is not a reason to reject

%explain 
%

\textbf{Summary of our work:} 

In this work, we draw the link between machine learning regularization methods and bandit exploration strategy, and propose a simple and practical solution with wide applicability to real-world contextual bandit problems. We show that, by simply applying the pure-greedy action selection strategy over a complex model, such as a boosting tree, that is trained following standard machine learning routines with regularization, we can obtain a reasonable amount of exploration and good reward performance. As a result, for real world contextual bandit problems, practitioners can use pure-greedy action selection along, or add very light amount of EXP or Falcon liked exploration to achieve reliable performance. This offers new insight to the contextual bandit problem, and removes or reduces the challenge in selecting bandit algorithm and performing parameter selection.

More specifically, we note that when fitting an estimator iteratively and determining the stopping iteration based on the change in loss score on a validation set, the stochasticity in such a process intrinsically mimics Thompson Sampling behavior and provides appropriate exploration over actions. In other words, applying pure-greedy on top of such an estimator can exhibit exploration behavior that is fundamentally different from the common notion of “pure-greedy.” This is because the likelihood of observing improvement in the validation loss functions similarly to the confidence level of a hypothesis test on whether the newly learned iteration reflects a truly existing reward pattern. By accepting a new iteration and continuing such stochastic learning, we effectively mimic the logic of Thompson Sampling—taking randomized actions proportional to the likelihood that each action is optimal.

Our insight stems from a simplified two-armed, non-contextual setting. In that case, we show that when applying early stopping to a simplified gradient boosting tree (either stop at the root level and predict all arms the same, or learn the arm difference exhibited from data), the randomness in the stopping iteration and fitted estimators induces the exploration behavior of Thompson Sampling. Our theoretical analysis and empirical evaluation indicate that, when applying pure greedy action selection on top of such a fitted estimator, the resulting reward performance is almost identical to that of Thompson Sampling.

We further test our idea using a real-world dataset from an email marketing campaign sent to 331,336 users, with a total of 27 action features and 113 contextual features. We construct both stationary and non-stationary environments based on this data and compare our proposed approach—pure greedy on top of a boosting tree trained with early stopping—to pure greedy without early stopping. We also compare it to additional approaches that replace pure greedy with epsilon-greedy, FALCON variants and EXP. Our results suggest that the regularization process in estimator training provides a reasonable amount of exploration by itself. Applying additional exploration strategies can yield modest improvements in simplified settings when parameters are manually tuned through grid search. However, these gains become negligible as the scenario evolves toward more complex, realistic business environments with a large number of features.

%A more detailed discussion on result. But I think it is enough to discuss it in the Result section
%In the stationary setting, all approaches perform similarly; however, in the non-stationary setting, our proposed method adapts more quickly to reward changes compared to models trained without early stopping. On the other hand, adding explicit exploration strategies yields little or no additional improvement in reward performance. Our findings in the stationary case align with existing theory: the rich variety of contextual features can induce sufficient passive exploration, making active exploration unnecessary \citep{...}. Yet it also serves as evidence that overfitting and higher least square loss itself is not the root cause of bad reward performance (in our later no-stationary case).  In the non-stationary case, where active exploration is vital, our results show that early stopping introduces appropriate intrinsic exploration by itself.  

Taken together, our results suggest that estimators trained with early stopping can serve as an intrinsic exploration mechanism.  This helps with reducing the research and deployment cost and/or simplifying the implementation pipeline. More broadly, our work suggests the loss reduction problem and bandit exploration problem have an intrinsic relation, and opens an interesting direction for investigating more efficient and practical bandit solutions.

\vspace{1em}
\noindent\textbf{Recommendation for practitioners:} \\
As our results suggest, the estimator training process itself can induce a substantial degree of exploration, and often the improvement from additional exploration strategies can be small or even negative (especially when the number of contextual features is large). Practically, this means people can focus on the implementation of the estimator itself and view it not just as an estimator but also as a randomized exploration strategy. If they want to add additional exploration strategies, our results suggest that they start by considering only a small amount of such exploration. For example, they can start by setting the contextual bandit algorithm parameter  that has total allocation probability to sub-optimal arms less than $5\%$ or $2\%$.

% %%we consider oracle

% %inconterast to TS , UCB, we explore under which condition a pure greedy could work 

% %say how it stop and how it works
% %if you use this way to fit model 
% [TODO: remind them what alpha is, and see if more recs]
% In our current evaluation, we examined several choices of our algorithm parameter $\alpha$.  
% For practitioners, we recommend setting $\alpha = 1$ as a safe and standard option that best represents a Thompson-Sampling-like exploration strategy.  
% Alternatively, choosing $\alpha = 1.5$ can promote stronger exploitation. Especially in settings where the number of contextual features is large—which introduces additional exploration beyond what early stopping generates—a more aggressive choice of $\alpha$ in the direction of less exploration may be appropriate. Moreover, in scenarios where the reward information is weak in the data (either because the reward is sparse or the amount of data is limited) and the environment changes rapidly, setting $\alpha = 1.5$ can help make better use of the available data, ensuring competitive performance before the learned pattern becomes obsolete.

\vspace{1em}
\noindent\textbf{Limitations and future directions} 

At the same time, several limitations and opportunities for future research remain.

Our theoretical analysis is restricted to the two-armed case. While this setting reveals the essential connection between early stopping and exploration, extending the theoretical framework to general multi-armed and contextual environments is an important next step toward establishing formal regret guarantees.

%challenges: not sure how the choice of 
%give analysis on a formal cross validation setting: 
%what is similar v.s. what is not
%the fact it has a binary  hypotehis testing informed action  is similar
%however, the fact it update many features at once makes it diffnert. indtead of accpting each using a TS bahevior, it acctpi them at bulk, and thus it can easily passs the test if most are exist but some are not, and the p-value is a comunpendde one. 
%nevertheless, that fact early stopping is better than no-early stoppping + falcon or epsilon UR, and the fact overfitting doen't necessarily hrut reward (in stationer case and in analysis plot) both suggest that early stoping and regualrization has stong link to exporaiton behavior. 
%can train first the best MSE model, and train on the residual 

%our analysis is opeched by sevearl other thigns. 1. pure greedy and voer fitting can work good; 
%we tried it in limited setting. 

Beyond this, our results suggest a broader and more general principle: certain forms of regularization in machine learning—such as early stopping—inherently encode hypothesis-testing behavior, which in turn can induce exploration. Specifically, when performing early stopping, the outcome from a new iteration is accepted only if it reduces loss on the validation set. Validating it on the validation set is essentially performing a hypothesis test on whether the reward function pattern learned in the new iteration exists in reality. This process parallels Thompson Sampling, which exploits the learned arm differences (i.e., sampling the seemingly better arm) with the Bayesian posterior likelihood that such differences truly exist. 

In light of this, a promising direction is to further discuss and characterize such a property on different types of regularization. For instance, ridge regularization in linear regression primarily penalizes parameter magnitude and may not alter the relative ordering of actions, making it less suitable for inducing exploration.  
In contrast, parameters such as the minimum number of points per leaf in boosting trees, especially if scaled proportionally to the sample size, may behave analogously to hypothesis tests and may generalize the intrinsic exploration property identified here. 
Studying such connections can help better guide the use of pure-greedy action selection on different scenarios/models.

\bibliographystyle{plainnat}
\bibliography{references}

\clearpage   % or \newpage, forces a page break

\section*{Appendix}  % star removes the section number

\setcounter{section}{0}  % optional: reset counters if you use subsections below
\renewcommand{\thesubsection}{A.\arabic{subsection}}  % optional, label subsections as A.1, A.2, etc.

\subsection{Proof on Lemma 1}
\begin{proof}
Tree$_0$ predicts $\bar r_{tr}$. Tree$_1$ shifts predictions toward arm-specific means:
\[
p_i = \bar r_{tr} + \eta(\bar r_{tr,i}-\bar r_{tr}), \quad i=1,2.
\]
Denote the change on arm~1 as $\delta=\eta(\bar r_{tr,1}-\bar r_{tr})$. To keep the weighted mean fixed, arm~2 must shift by $-n_1/n_2\delta$, consistent with $p_2=\bar r_{tr}+\eta(\bar r_{tr,2}-\bar r_{tr})$.

Validation loss is
\[
L(\delta)=\tfrac{1}{n_1+n_2}\Big[n_1\ell(\bar r_{val,1},p_1)+n_2\ell(\bar r_{val,2},p_2)\Big],
\]
with $\ell(\nu,p)=-\nu\log p-(1-\nu)\log(1-p)$. Differentiating at $\delta=0$ (so $p_1=p_2=\bar r_{tr}$) yields
\[
\frac{dL}{d\delta}\Big|_{\delta=0} = \frac{n_1}{(n_1+n_2)\bar r_{tr}(1-\bar r_{tr})}\,(\bar r_{val,2}-\bar r_{val,1}).
\]

Thus the sign of $\delta$ is $\operatorname{sign}(\bar r_{tr,1}-\bar r_{tr,2})$, while the sign of $-dL/d\delta|_{\delta=0}$ is $\operatorname{sign}(\bar r_{val,1}-\bar r_{val,2})$. Loss decreases iff these agree, i.e.\ iff the training and validation differences have the same sign.
\end{proof}

\subsection{Proof of Lemma 2}

\begin{proof}
Let the empirical arm means on the full sample be $\bar r_1,\bar r_2$, and set $\Delta=\bar r_1-\bar r_2$. 
Assume $\Delta>0$ without loss of generality. 
Each arm $i$ contributes $N_i=2n_i$ samples, split evenly into training and validation sets of size $n_i$. 
Denote the training means by $\bar r_{tr,i}$ and validation means by $\bar r_{val,i}$, and define
\[
D_{tr} = \bar r_{tr,1}-\bar r_{tr,2}, \qquad 
D_{val} = \bar r_{val,1}-\bar r_{val,2}.
\]
Because the union of training and validation samples equals the full set, we have the exact identity
\[
\frac{D_{tr}+D_{val}}{2} = \Delta.
\]

The training and validation differences agree in sign precisely when
\[
0 < D_{tr} < 2\Delta.
\]
If $D_{tr}\leq 0$ then $D_{val}\geq 2\Delta$ and the signs differ, and likewise if $D_{tr}\geq 2\Delta$.

Each $\bar r_{tr,i}$ is a hypergeometric mean with expectation $\bar r_i$ and variance 
\[
\operatorname{Var}(\bar r_{tr,i}) = \frac{1}{2n_i}\bar r_i(1-\bar r_i)\Big(1-\frac{n_i}{2n_i}\Big) \approx \frac{1}{2n_i}\bar r_i(1-\bar r_i).
\]
Since the two arms are sampled independently, $D_{tr}$ is centered at $\Delta$ with variance 
\[
\operatorname{Var}(D_{tr}) \approx \frac{\bar r_1(1-\bar r_1)}{2n_1} + \frac{\bar r_2(1-\bar r_2)}{2n_2}.
\]
By the central limit theorem, $D_{tr}$ is approximately normal with this mean and variance. This aligns with the variance of $\bar r_{tr,i}$ when $r_1=r_2$.

% Rescaling $D_{tr}$ to unit variance shows that the event $0 < D_{tr} < 2\Delta$ corresponds exactly to the acceptance region of the two-sided Wald test for $H_0:\bar r_1=\bar r_2$ against $H_1:\bar r_1\neq \bar r_2$. 
% Thus, the probability of a sign match is the complement of the Wald rejection probability:
% \[
% \Pr\big(\operatorname{sign}(D_{tr})=\operatorname{sign}(D_{val})\big) 
% = \Pr\big(0<D_{tr}<2\Delta\big) 
% \approx 1-p_{\text{wald}}.
% \]
\end{proof}

%Claim:
%1. discorvery ponit people a differnt view /aspect of the contexutla bandit probel: regularied esitmatior itself has explorat behaveor
%don't need adddional (save resrarch cost
%save time/impelnt /reseach

%the contribution better exposes the strutuce of the problme : there's many soursec of vairablity that can geneatte exploration
%give peoplr better understaing of the problem . the cources natuurally poses 

%2.  in the context rich environment , it could replace the need for seraching a 
%they findthe evalutre, find best paramter for -> across settings
%presence of those tunning parameter creates a large conpute tmie and cost ->get multipled  worsen if you consider accors serting applications
% 3, in less, it reduces the need , make it easier to serach for the apraemter 
%narrow the scop of how much we would need to search

\end{document}